\documentclass[conference]{IEEEtran}
\pagestyle{plain}

\IEEEoverridecommandlockouts
\usepackage{cite}
\usepackage{amsmath,amssymb,amsfonts}
\usepackage{amsthm}
\usepackage{graphicx}
\usepackage{textcomp}
\usepackage{xcolor}

\usepackage{hyperref}       
\usepackage{url}            
\usepackage{booktabs}       
\usepackage{amsfonts}       
\usepackage{nicefrac}       
\usepackage{microtype}      

\usepackage{subfigure}
\usepackage{amsmath}
\usepackage{wrapfig}
\usepackage{array}
\usepackage{multirow}
\usepackage{listings}

\usepackage[ruled,linesnumbered]{algorithm2e}

\usepackage{amssymb}
\usepackage[normalem]{ulem}
\useunder{\uline}{\ul}{}

\graphicspath{{./img/}}

\def\BibTeX{{\rm B\kern-.05em{\sc i\kern-.025em b}\kern-.08em
    T\kern-.1667em\lower.7ex\hbox{E}\kern-.125emX}}
\begin{document}

\title{Interdependency Matters: Graph Alignment for Multivariate Time Series Anomaly Detection
}

\author{
\IEEEauthorblockN{
Yuanyi Wang\IEEEauthorrefmark{2},
Haifeng Sun\IEEEauthorrefmark{2}$^{\ast}$ \thanks{* Haifeng Sun and Qi Qi are the corresponding author},
Chengsen Wang\IEEEauthorrefmark{2},
Mengde Zhu\IEEEauthorrefmark{2},
Jingyu Wang\IEEEauthorrefmark{2},\\
Wei Tang\IEEEauthorrefmark{3},
Qi Qi\IEEEauthorrefmark{2}$^{\ast}$,
Zirui Zhuang\IEEEauthorrefmark{2},
Jianxin Liao\IEEEauthorrefmark{2}
}
\IEEEauthorblockA{\IEEEauthorrefmark{2} Beijing University of Posts and Telecommunications, Bejing, China}
\IEEEauthorblockA{\IEEEauthorrefmark{3}Huawei Translation Services Center, Beijing, China}
\IEEEauthorblockA{\{wangyuanyi,hfsun,cswang,arnoldzhu,wangjingyu,qiqi8266,zhuangzirui, liaojx\}@bupt.edu.cn, tangwei133@huawei.com}}

\maketitle

\begin{abstract}
Anomaly detection in multivariate time series (MTS) is crucial for various applications in data mining and industry. Current industrial methods typically approach anomaly detection as an unsupervised learning task, aiming to identify deviations by estimating the normal distribution in noisy, label-free datasets. These methods increasingly incorporate interdependencies between channels through graph structures to enhance accuracy. However, the role of interdependencies is more critical than previously understood, as shifts in interdependencies between MTS channels from normal to anomalous data are significant. This observation suggests that \textit{anomalies could be detected by changes in these interdependency graph series}. To capitalize on this insight, we introduce MADGA (MTS Anomaly Detection via Graph Alignment), which redefines anomaly detection as a graph alignment (GA) problem that explicitly utilizes interdependencies for anomaly detection. MADGA dynamically transforms subsequences into graphs to capture the evolving interdependencies, and Graph alignment is performed between these graphs, optimizing an alignment plan that minimizes cost, effectively minimizing the distance for normal data and maximizing it for anomalous data. Uniquely, our GA approach involves explicit alignment of both nodes and edges, employing Wasserstein distance for nodes and Gromov-Wasserstein distance for edges. To our knowledge, this is the first application of GA to MTS anomaly detection that explicitly leverages interdependency for this purpose. Extensive experiments on diverse real-world datasets validate the effectiveness of MADGA, demonstrating its capability to detect anomalies and differentiate interdependencies, consistently achieving state-of-the-art across various scenarios.
\end{abstract}

\begin{IEEEkeywords}
multivariate time series, anomaly detection, graph alignment, unsupervised learning
\end{IEEEkeywords}

\section{Introduction}
\label{section:Introduction}
Anomaly detection in Multivariate Time Series (MTS), which comprises multiple intricate channels, aims at identifying abnormal states and their underlying causes at specific time steps. The challenges in real-world applications \cite{ahmed2017wadi} are compounded by data imbalances and scarce labeled information, often making manual labeling costly and inefficient. Even with some labels, the costly and labor-intensive nature of manual labeling means many anomalies remain undetected. Consequently, unsupervised learning has emerged as the dominant approach in the industry, focusing on detecting anomalies in real noisy data without relying on labels.

Traditional methods like one-class support vector machines \cite{scholkopf2001estimating} and isolation forest \cite{liu2008isolation} have been foundational in this field. Recently, many deep learning methods have emerged, demonstrating notable effectiveness for handling complex data \cite{ruff2019deep}. These methods focus on learning spatial characteristics within multivariate metrics \cite{audibert2020usad} or modeling these temporal dependencies across time steps \cite{wang2024drift}. Despite their success in density estimation and anomaly detection, these approaches often fail to adequately capture the complex interdependencies among the series' channels. The interdependencies between channels are pivotal for detecting anomalies in MTS. For instance, as shown in Fig.\ref{fig_swat}, experiments reveal significant shifts in graph structures, which represent channel interdependencies, from normal to anomalous data. Recent studies have introduced the graph-based model to capture interdependencies within MTS channels, which use fully connected \cite{dai2022graph} or dynamic \cite{zhou2023detecting} graphs across channels. Despite their differences, these methods share a common goal: to leverage interdependencies between channels through graph structures for a more accurate estimation of robust normal distributions.

\begin{figure}[t]
\centering
\includegraphics[width=0.9\linewidth]{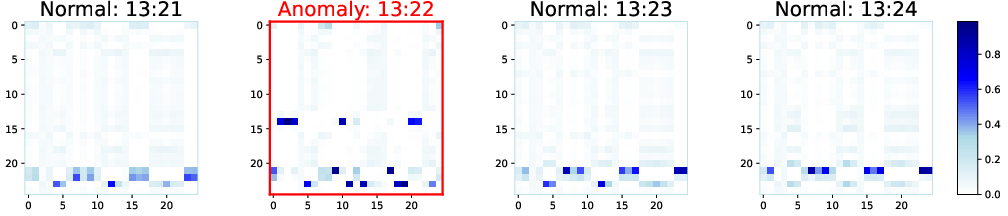}
\caption{The adjacency matrix illustrates the interdependencies of different channels of MTS shift significantly from normal to anomalous data in real-world MTS dataset SWaT.}
\label{fig_swat}
\vspace{-0.4cm}
\end{figure}

In this paper, we introduce MADGA (Multivariate Time Series Anomaly Detection via Graph Alignment), a novel framework that reframes anomaly detection as a Graph Alignment (GA) problem. This approach is motivated by the insight that anomalies manifest through significant shifts in the interdependency graph series from normal to anomalous states. Graph alignment optimizes the mapping between two graphs' nodes to minimize adjacency disagreements. Our framework extends this by aligning not only nodes but also edges, optimizing both distribution and interdependency. The transformation of MTS into a sequence of graphs enables the detection of anomalies by minimizing the alignment cost between graphs, thereby amplifying the distinction between normal and anomalous data. To the best of our knowledge, MADGA is the first to leverage both distribution and interdependency in graph alignment for MTS anomaly detection.

We address several key challenges in realizing this framework:(\romannumeral1) transforming MTS anomaly detection into a graph alignment problem, (\romannumeral2) measuring distribution and interdependency across different spaces, and (\romannumeral3) ensuring real-time performance despite the computational cost of graph alignment. To tackle these, we construct graphs from MTS windows, where channels are nodes and their interdependencies, captured by attention mechanisms, form edges. Anomalous graphs are identified by deviations in topology, and an optimal alignment minimizes costs for normal data while maximizing it for anomalies. We measure deviation using Wasserstein distance (WD) for node alignment and Gromov-Wasserstein distance (GWD) for edge alignment. To enhance efficiency, we employ batch-wise alignments and apply entropic regularization to WD and GWD, utilizing the Sinkhorn algorithm~\cite{peyre2019computational} to further improve computational performance.

We highlight the following contributions of this work:
\begin{itemize}
\item We establish a solid theoretical basis for these concepts that multivariate time series anomaly detection can be transformed into the graph alignment problem.

\item We introduce MADGA, the first framework to leverage both distribution and interdependency of channels in MTS anomaly detection via graph alignment.


\item Our analysis highlights the importance of dynamic graph modeling, demonstrating how shifts in interdependencies are critical for effectively distinguishing anomalous states.
\end{itemize}

\section{Related Work}
\label{section:RelatedWork}

\subsection{MTS Anomaly Detection}
\label{section:AnomalyDetection}
Classical MTS anomaly detection methods, like one-class SVMs \cite{scholkopf2001estimating} and isolation forests \cite{liu2008isolation}, laid the groundwork for the field. With deep learning advancements, new approaches improved complex data handling \cite{ruff2019deep}. Some focus on spatial characteristics, neglecting temporal dependencies \cite{audibert2020usad}, while others, like \cite{wang2024drift, su2019robust}, capture temporal patterns. However, they often miss the intricate interdependencies among data channels.

Recent graph-based methods address this limitation by modeling inter-temporal and inter-variable relationships. They can be categorized into three approaches: reconstruction methods \cite{sabokrou2020deep} that use reconstruction errors to detect anomalies, forecast methods \cite{wang2024drift} that highlight deviations between predicted and actual values, and relational discrepancy methods \cite{dai2022graph, zhou2023detecting} that leverage relationship graphs to estimate robust normal distributions. While these methods effectively model interdependencies, they primarily focus on enhancing encoder performance. Our MADGA introduces a novel perspective by explicitly leveraging graph structural changes to detect anomalies, providing a fundamentally different approach from existing techniques.

\subsection{Graph Alignment (GA)}
\label{section:GraphAlignment}
Graph alignment (GA) optimizes the mapping between node sets of two graphs to minimize adjacency disagreements, often viewed as a special case of the quadratic assignment problem \cite{ganassali2021impossibility}. Traditionally, methods like the Hungarian algorithm \cite{kuhn1955hungarian} addressed computational challenges but struggled with large or isomorphic graphs. With the advent of graph neural networks (GNNs), GA advanced by learning embeddings that capture both topology and node features \cite{li2019graph}. The integration of Wasserstein distance (WD) for node alignment and Gromov-Wasserstein distance (GWD) for metric matching further expanded its utility \cite{peyre2019computational}. In this work, we transform MTS anomaly detection into a GA problem, introducing a novel framework that leverages WD for node alignment and GWD for edge alignment, marking the first application of GA to explicitly utilize both distribution and interdependencies of channels for anomaly detection.

\section{Theoretical Analysis}
\label{section:ProblemStatement}

\subsection{From Detection to Alignment}
\label{section:Definition}

\theoremstyle{plain}
\newtheorem{myDef}{\noindent\bf Definition}
\begin{myDef}
\textbf{Multivariate Time Series (MTS):} The raw time series $D$ primarily consists of unlabeled instances, assumed mostly non-anomalous. Each instance $\mathcal{X} \in D$ contains $B$ sampled series, each with $N$ channels and length $T$, represented as $\mathcal{X} = [X^1, \dots, X^B]^\top$ where $X^i \in \mathbb{R}^{T \times N}$. These are typically prepared via a sliding window of size $T$ and stride $S$, with $B$ indicating the sampling batch size.
\end{myDef}

\begin{myDef}
\textbf{Graph Alignment (GA):} Given two graphs with embeddings $\mathbf{X}_1, \mathbf{X}_2$ and adjacency matrices $\mathbf{A}_1, \mathbf{A}_2$, the GA is generally framed as a quadratic assignment problem:
\begin{equation}
\label{traditionaldefinition}
\begin{aligned}
& \mathop{\arg\max}\limits_{\mathbf{P} \in \Pi} \left \langle \mathbf{P},\mathbf{X} \right \rangle_F + \left \langle \mathbf{P},\mathbf{A} \right \rangle_F
\\  =
& \mathop{\arg\min}\limits \mathcal{D}_1 (\mathbf{X}_1, \mathbf{X_2}) + \mathcal{D}_2 (\mathbf{A}_1, \mathbf{A_2})
\end{aligned}
\end{equation}
where $\mathbf{X} = \mathbf{X}_1 \cup \mathbf{X}_2$, $\mathbf{A} = \mathbf{A}_1 \cup \mathbf{A}_2$, $\mathbf{P}$ are permutation matrices representing alignment plans, $\Pi$ denotes the set of all permutation matrices, $\left \langle \cdot,\cdot\right \rangle$ is Frobenius dot-product, and $\mathcal{D}_i$ measures distance, with lower values indicating higher similarity. Solving this problem is generally NP-hard.
\end{myDef}

\begin{myDef}
     \textbf{Dynamic Graph:} Given a MTS instance $\mathcal{X} \in \mathbb{R}^{B \times T \times N}$, each window series $X^j \in \mathbb{R}^{T \times N}$ forms a dynamic graph where each channel series is regarded as a node with $\mathbf{X}^j_i \in \mathbb{R}^{T \times 1}$ representing node features, and $\mathbf{A}_i \in \mathbb{R}^{N \times N}$ as the learned adjacency matrix. The multivariate time series is transformed into the graph series $(\mathbf{X}_i, \mathbf{A}_i), i=\{ 1,...,B \}$.
\end{myDef} 
Anomalies in MTS often occur in low-density regions, where normal data clusters are more compact, and anomalies are more dispersed. This requires a method to measure distances in MTS, minimizing distances for normal data and maximizing them for anomalies. By transforming MTS into dynamic graphs, we can precisely detect anomalies:
\begin{equation}
    I = \mathop{\arg\max}\limits_{i \in \{1,2,...,B\}} \mathcal{D}_1 (\mathbf{X}^i, \mathbf{X}^\Omega) + \mathcal{D}_2 (\mathbf{A}^i, \mathbf{A}^\Omega)
\end{equation}
Given any sub-series $X^i \in \mathcal{X}$, we exclude this series to define the remaining part of the instance as
\begin{equation}
    X^\Omega:= \mathcal{X}\backslash X^i,
\end{equation}
and the adjacency matrices and graph embeddings are defined as follows, where $d$ is the embedding dimension:
\begin{equation}
\begin{aligned}
    & \mathbf{X}^\Omega:= \mathbf{X}\backslash \mathbf{X}^i \in \mathbb{R}^{(B-1)\times N \times d} \\ & \mathbf{A}^\Omega:= \mathbf{A}\backslash \mathbf{A}^i \in \mathbb{R}^{(B-1)\times N \times N}
\end{aligned}
\end{equation}

\begin{myDef}
\label{defAD}
\textbf{Anomaly Detection:}
Given MTS data $D$, we construct dynamic graphs for each instance $\mathcal{X}$, measuring distributional and topological differences using Wasserstein distance $\mathcal{D}_{wd}$ for node embeddings and Gromov-Wasserstein distance $\mathcal{D}_{gwd}$ for adjacency matrices. The objective is:
\begin{equation}
\label{ourdefinition}
\begin{aligned}
\mathop{\arg\min}\limits & \quad \mathcal{D}_{wd} (\mathbf{X}^i, \mathbf{X}^\Omega) + \mathcal{D}_{gwd} (\mathbf{A}^i, \mathbf{A}^\Omega) \\
& = \mathop{\arg\max}\limits_{\mathbf{P} \in \Pi} \left \langle \mathbf{P},\mathbf{X} \right \rangle_F + \left \langle \mathbf{P},\mathbf{A} \right \rangle_F \\
& = \mathop{\arg\min}\limits_{\mathbf{P} \in \Pi} \left| \right| \mathbf{P} \mathbf{A}^i \mathbf{X}^i - \mathbf{A}^\Omega \mathbf{X}^\Omega \left| \right|^2_F
\end{aligned}
\end{equation}
\end{myDef}
In the unsupervised setting, anomalies are detected by evaluating GA distances, where lower distances indicate anomalies. Unlike typical GA, our alignment is between one graph and the remaining graphs in the same instance.

\subsection{Conditional Alignment}
Definition \ref{defAD} reframes MTS anomaly detection as a graph alignment problem, aiming to minimize distances for normal data and achieve optimal alignment. Importantly, Equation \ref{ourdefinition} treats node embeddings $\mathbf{X}$ and adjacency matrices $\mathbf{A}$ as dependent, where $\mathbf{X}$ is often conditional on $\mathbf{A}$. This section explores node alignment in greater detail to better implement the alignment process described in Equation \ref{ourdefinition}.
\newtheorem{theorem}{\bf Theorem}
\begin{theorem}
\label{theorem1}
Given a dynamic graph generated from an MTS instance $\mathcal{X} \in \mathbb{R}^{B \times N \times T}$ with adjacency matrices $\mathbf{A} \in \mathbb{R}^{B \times N \times N}$ and node embeddings $\mathbf{X} \in \mathbb{R}^{B \times N \times d}$ (where $d$ is the embedding dimension), the graph alignment between source and target graphs can be transformed into:
\begin{equation}
\mathop{\arg\max}\limits_{\mathbf{P} \in \Pi} \left \langle \mathbf{P}, \mathbf{A}^i \mathbf{X}^i  (\mathbf{A}^\Omega \mathbf{X}^\Omega)^\top \right \rangle_F
\end{equation}
\end{theorem}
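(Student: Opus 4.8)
The plan is to obtain the claim directly from the least-squares form in Definition~\ref{defAD}, i.e.\ from $\arg\min_{\mathbf{P}\in\Pi}\lVert \mathbf{P}\mathbf{A}^i\mathbf{X}^i-\mathbf{A}^\Omega\mathbf{X}^\Omega\rVert_F^2$, by expanding the squared Frobenius norm and discarding every term that does not depend on $\mathbf{P}$. Writing $\mathbf{M}:=\mathbf{A}^i\mathbf{X}^i$ and $\mathbf{N}:=\mathbf{A}^\Omega\mathbf{X}^\Omega$ (read as conformable matrices; if the batch index is kept, the argument below applies summand by summand), expanding the square gives
\[
\lVert \mathbf{P}\mathbf{M}-\mathbf{N}\rVert_F^2=\lVert \mathbf{P}\mathbf{M}\rVert_F^2-2\,\langle \mathbf{P}\mathbf{M},\mathbf{N}\rangle_F+\lVert \mathbf{N}\rVert_F^2 .
\]
The third term is constant in $\mathbf{P}$. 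For the first term I would use that a permutation matrix is orthogonal, so $\mathbf{P}^\top\mathbf{P}=\mathbf{I}_N$ and $\lVert \mathbf{P}\mathbf{M}\rVert_F^2=\operatorname{tr}(\mathbf{M}^\top\mathbf{P}^\top\mathbf{P}\mathbf{M})=\lVert \mathbf{M}\rVert_F^2$, which is again constant in $\mathbf{P}$. Hence minimizing $\lVert \mathbf{P}\mathbf{M}-\mathbf{N}\rVert_F^2$ over $\Pi$ is equivalent to maximizing the single bilinear term $\langle \mathbf{P}\mathbf{M},\mathbf{N}\rangle_F$.

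Next I would move $\mathbf{P}$ out of the inner product using $\langle \mathbf{U},\mathbf{V}\rangle_F=\operatorname{tr}(\mathbf{U}^\top\mathbf{V})$ together with the cyclic invariance of the trace:
\[
\langle \mathbf{P}\mathbf{M},\mathbf{N}\rangle_F=\operatorname{tr}\big(\mathbf{M}^\top\mathbf{P}^\top\mathbf{N}\big)=\operatorname{tr}\big(\mathbf{P}^\top\mathbf{N}\mathbf{M}^\top\big)=\big\langle \mathbf{P},\,\mathbf{N}\mathbf{M}^\top\big\rangle_F .
\]
Finally, since $\Pi$ is closed under transposition and $\langle \mathbf{P},\mathbf{C}\rangle_F=\langle \mathbf{P}^\top,\mathbf{C}^\top\rangle_F$, the bijective substitution $\mathbf{P}\mapsto\mathbf{P}^\top$ converts $\arg\max_{\mathbf{P}}\langle \mathbf{P},\mathbf{N}\mathbf{M}^\top\rangle_F$ into $\arg\max_{\mathbf{P}}\langle \mathbf{P},\mathbf{M}\mathbf{N}^\top\rangle_F=\arg\max_{\mathbf{P}\in\Pi}\langle \mathbf{P},\,\mathbf{A}^i\mathbf{X}^i(\mathbf{A}^\Omega\mathbf{X}^\Omega)^\top\rangle_F$, which is exactly the asserted form; here $\mathbf{A}^i\mathbf{X}^i$ plays the role of the node embedding $\mathbf{X}^i$ made conditional on the topology $\mathbf{A}^i$, consistent with the remark preceding the theorem.

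I do not expect a substantive obstacle: the argument is a short chain of linear-algebra identities. The only two places that need care are (i) the passage from $\arg\min$ to $\arg\max$, which requires noting that \emph{both} $\lVert \mathbf{P}\mathbf{A}^i\mathbf{X}^i\rVert_F^2$ and $\lVert \mathbf{A}^\Omega\mathbf{X}^\Omega\rVert_F^2$ are $\mathbf{P}$-invariant (the former because permutations are orthogonal), and (ii) the transpose bookkeeping that decides whether the cost matrix appears as $\mathbf{A}^i\mathbf{X}^i(\mathbf{A}^\Omega\mathbf{X}^\Omega)^\top$ or its transpose, which is settled by the transposition-closure of $\Pi$. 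As a consistency check, the same expansion also shows that the $\langle\mathbf{P},\mathbf{X}\rangle_F+\langle\mathbf{P},\mathbf{A}\rangle_F$ formulation in Equation~\ref{ourdefinition} differs from the least-squares objective only by $\mathbf{P}$-independent constants, so all three expressions in Definition~\ref{defAD} share the same optimizers.
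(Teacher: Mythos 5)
Your proposal is correct and follows essentially the same route as the paper's proof: expand the squared Frobenius norm, drop the two $\mathbf{P}$-invariant terms using orthogonality of permutation matrices, and rewrite the remaining cross term as $\langle \mathbf{P}, \cdot\rangle_F$ via the trace identity. In fact you are more careful than the paper on the two points it glosses over — the paper's displayed equations keep $\arg\min$ where the objective has become a maximization, and it omits the transpose bookkeeping (your $\mathbf{N}\mathbf{M}^\top$ versus $\mathbf{M}\mathbf{N}^\top$ step, resolved by the closure of $\Pi$ under transposition) — so no gap remains.
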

\begin{proof}
Using the Frobenius norm property $\|A - B\|_F^2 = \|A\|_F^2 + \|B\|_F^2 - 2\langle A, B \rangle_F$, we can reformulate Equation \ref{ourdefinition} as follows:
\begin{equation}
\begin{aligned}
& \mathop{\arg\min}\limits_{\mathbf{P} \in \Pi} \| \mathbf{P} \mathbf{A}^i \mathbf{X}^i - \mathbf{A}^\Omega \mathbf{X}^\Omega \|_F^2 = \\
\mathop{\arg\min}\limits_{\mathbf{P} \in \Pi} & \left( \|\mathbf{P} \mathbf{A}^i \mathbf{X}^i\|_F^2 + \|\mathbf{A}^\Omega \mathbf{X}^\Omega\|_F^2 - 2 \langle \mathbf{P} \mathbf{A}^i \mathbf{X}^i, \mathbf{A}^\Omega \mathbf{X}^\Omega \rangle_F \right)
\end{aligned}
\end{equation}
Given $\mathbf{P}$ is orthogonal \cite{ganassali2021impossibility}, $|\mathbf{P} \mathbf{A}^i \mathbf{X}^i|F^2$ and $|\mathbf{A}^\Omega \mathbf{X}^\Omega|F^2$ remain constants. This reduces the problem to maximizing:
\begin{equation}
\mathop{\arg\min}\limits_{\mathbf{P} \in \Pi} \langle \mathbf{P} \mathbf{A}^i \mathbf{X}^i, \mathbf{A}^\Omega \mathbf{X}^\Omega \rangle_F
\end{equation}
For arbitrarily real matrices $A$ and $B$, these two equations always hold: $\langle A, B \rangle_F = \operatorname{Tr}(AB^T)$ and $\langle A, B + C \rangle_F = \langle A, B \rangle_F + \langle A, C \rangle_F$, where $\operatorname{Tr}(X)$ represents the trace of matrix $X$. Therefore, Theorem.\ref{theorem1} is proved:
\begin{equation}
\begin{aligned}
& \mathop{\arg\min}\limits_{\mathbf{P} \in \Pi} \langle \mathbf{P} \mathbf{A}^i \mathbf{X}^i, \mathbf{A}^\Omega \mathbf{X}^\Omega \rangle_F \\
= & \mathop{\arg\min}\limits_{\mathbf{P} \in \Pi} \operatorname{Tr}(\mathbf{P} \mathbf{A}^i \mathbf{X}^i, \mathbf{A}^\Omega \mathbf{X}^\Omega) \\
= & \mathop{\arg\min}\limits_{\mathbf{P} \in \Pi} \langle \mathbf{P}, \mathbf{A}^i \mathbf{X}^i (\mathbf{A}^\Omega \mathbf{X}^\Omega)^\top \rangle_F
\end{aligned}
\end{equation}
\end{proof}
Theorem \ref{theorem1} establishes that node alignment must consider the distribution of node embeddings $\mathbf{X}$ conditioned on adjacency matrices $\mathbf{A}$, rather than just initial node features. This approach enables the use of any graph propagation operator, such as graph convolution, for conditional alignment. Accordingly, we propose a conditional distribution encoder (Section \ref{subsectionencoder}) that encodes initial node features and adjacency matrices into node embeddings, forming the basis for node alignment.

\subsection{Problem Statement}
\newtheorem{myProblem}{\noindent\bf Problem}
\begin{myProblem}
Given an unlabeled dataset $D = {\mathcal{X}_i}^{|D|}{i=1}$ of MTS, our objective is to construct dynamic graph by learning the interdependencies between channels (adjacency matrix $\mathbf{A}$) and encoding the MTS $\mathcal{X}$ into node embeddings $\mathbf{X}$. We then compute graph alignment distances, $\mathcal{D}_{wd}$ and $\mathcal{D}_{gwd}$, to derive anomaly scores for each instance $\mathcal{X}$.
\begin{equation}
\begin{aligned}
    Encoder(\mathcal{X}) &\rightarrow \mathbf{A},\mathbf{X} \\
    Alignment(\mathbf{X}, \mathbf{A}) &\rightarrow \mathcal{D}_{wd}, \mathcal{D}_{gwd}
\end{aligned}
\end{equation}
\end{myProblem}

\section{Method}
\label{section:Method}
This section introduces the MADGA framework for MTS anomaly detection, which reformulates the problem as graph alignment and uses GA distances as anomaly scores. MADGA includes a dynamic graph construction module to model interdependencies as adjacency matrices $\mathbf{A}$ and a conditional distribution encoder for generating graph embeddings $\mathbf{X}$. Wasserstein distance (WD) aligns nodes, while Gromov-Wasserstein distance (GWD) aligns edges. Fig. \ref{MADGA_structure} illustrates the MADGA framework.

\begin{figure}[t]
\centering
\includegraphics[width=\linewidth]{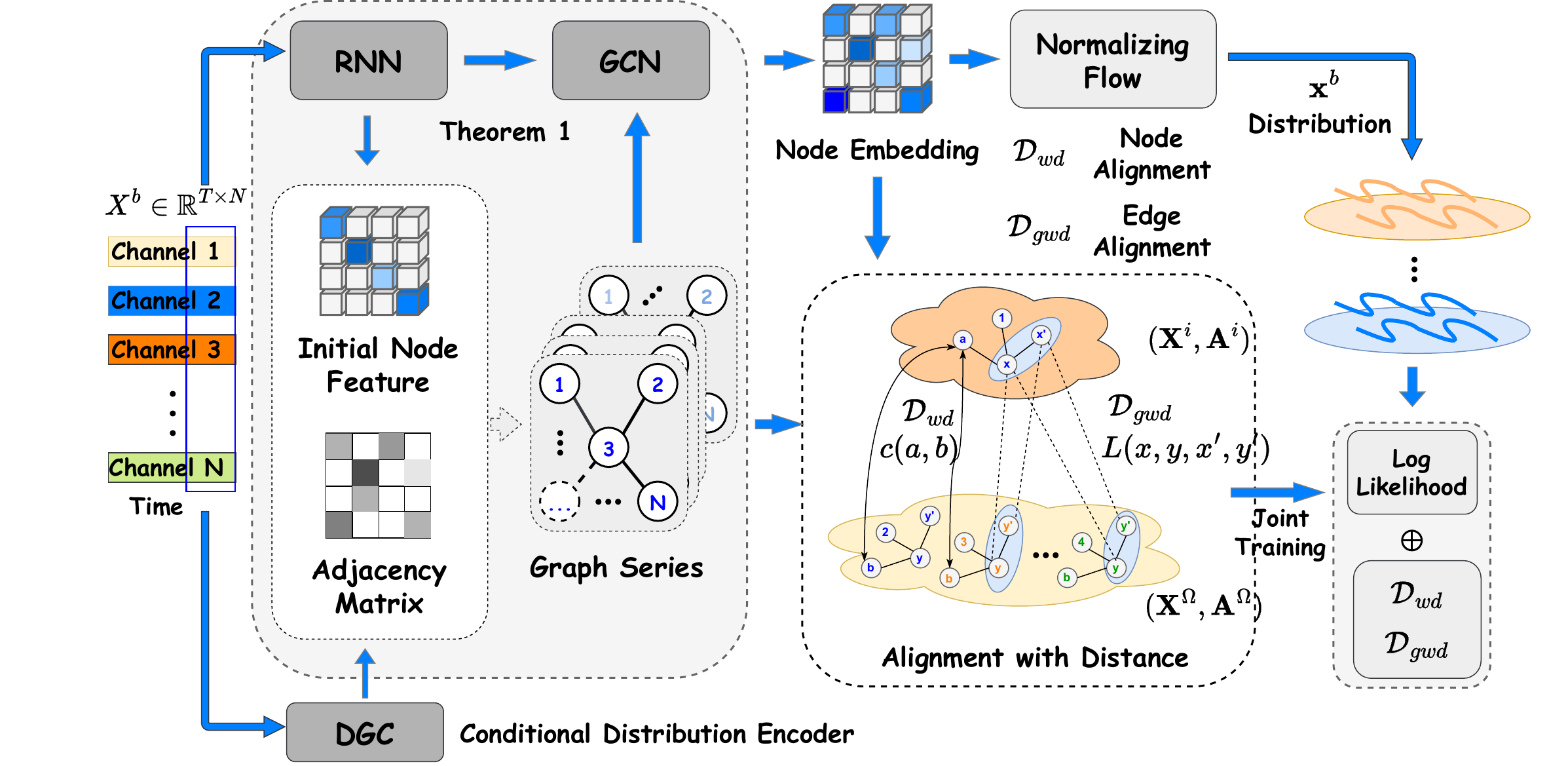}
\caption{Overview of the proposed MADGA.}
\label{MADGA_structure}
\vspace{-0.4cm}
\end{figure}

\subsection{Dynamic Graph Construction (DGC)}
\label{section:graphconstruction}
To account for the mutual and evolving interdependencies among channels, self-attention is employed to derive a dynamic graph structure. Each channel in MTS is treated as a graph node. For each window sequence $X^b = [x_1^b, x_2^b, \ldots, x_n^b] \in \mathcal{X}, x_i^b \in \mathbb{R}^{T \times 1}$, the query and key for node $i$ are given by $W^Q x_i^b$ and $W^K x_i^b$ respectively, where $W^Q, W^K \in \mathbb{R}^{T \times T}$ are learned matrices. The attention score $a_{ij}^b$ quantifying the relation from node $i$ to node $j$ is computed as:
\begin{equation}
    a_{ij}^b = \frac{e_{ij}}{\sum_{j=1}^N e_{ij}}
\end{equation}
\begin{equation}
    e_{ij}^b = exp(\frac{(W^Q x_i^b)(W^K x_i^b)^\top}{\sqrt{T}})
\end{equation}
This attention matrix forms the adjacency matrix $\mathbf{A}^b$, dynamically adapting over time to capture evolving interdependencies.

\subsection{Conditional Distribution Encoder}
\label{subsectionencoder}
Estimating the distribution of MTS, conditioned on the adjacency matrix $\mathbf{A}$, requires robust spatio-temporal information. The dynamic graph captures spatial and temporal correlations essential for feature time series. Following \cite{zhou2023detecting, dai2022graph}, we use RNNs, like LSTM \cite{hochreiter1997long}, to model temporal dynamics:
\begin{equation}
\label{equation:rnn}
    H_n^t = \mathbf{RNN}(X_n^t, H_n^{t-1}),\quad t\in [BS:BS+T]
\end{equation}
To enhance temporal relationships, we apply graph convolution over the dynamically learned graph $\mathbf{A}$, combining current and historical node information:
\begin{equation}
    \mathbf{X}^t = ReLU(\mathbf{A}^b H^t W_1 + H^{t-1} W_2)W_3
\end{equation}
Here, $W_1$, $W_2$, and $W_3$ are learned weights. The final conditional graph embedding for instance $\mathcal{X}$ is obtained by concatenating $\mathbf{X}^t$ along the time dimension, summarizing the spatio-temporal context for anomaly detection.

\subsection{Alignment Distance}
As shown in Fig. \ref{MADGA_structure}, the goal of graph alignment is to find an optimal plan that minimizes alignment cost, reducing the distance for normal data and increasing it for anomalies. We employ two distances: Wasserstein distance (WD) for node and Gromov-Wasserstein distance (GWD) for edge alignment.

Let $P(X)$ and $P(Y)$ represent probability measures on metric spaces $X$ and $Y$. For two discrete distributions,$\mu = \sum_{i=1}^n u_i \delta_{x_i}$ and $\nu = \sum_{j=1}^m v_j \delta_{y_j}$, the definition as:
\begin{myDef}
\label{Definition:WD}
\textbf{Wasserstein distance (WD):}
The $p$-Wasserstein distance between $\mu$ and $\nu$ is formally defined as:
\begin{equation}
\begin{aligned}
    D_{wd}(\mu, \nu) = \left( \min_{\mathbf{P} \in \Pi(\mu, \nu)} \sum_{i=1}^n \sum_{j=1}^m \mathbf{P}_{ij} \cdot c(x_i, y_j)^p \right)^{1/p}
\end{aligned}
\end{equation}
\end{myDef}
where $\mathbf{P}$ is the permutation matrix minimizing the assignment cost $c(x, y)$. In our GA framework, WD measures distances between sampled node pairs, typically with $p=1$.

In contrast to directly calculating node distances as in WD, Gromov-Wasserstein distance (GWD) allows the alignment of edges by comparing node pair distances in the adjacency matrices. It is defined as:
\begin{myDef}
\label{Definition:GWD}
\textbf{Gromov-Wasserstein distance (GWD):}
Consider two metric measure spaces, where \(\mu \in P(X)\) and \(\nu \in P(Y)\) represent probability measures over discrete spaces \(X\) and \(Y\). The $p$-Gromov-Wasserstein distance, a measure of dissimilarity between these spaces, is formally defined as follows:
\begin{equation}
\begin{aligned}
D_{gwd}(\mu, \nu) = \left( \min_{\mathbf{P} \in \Pi(\mu, \nu)} \sum_{i,j,i',j'} \hat{\mathbf{P}}_{ij} \hat{\mathbf{P}}_{i'j'} L(x_i, y_j, x_{i'}, y_{j'})^p \right)^{1/p}
\end{aligned}
\end{equation}
\end{myDef}
\((x_i, x_{i'})\) and \((y_j, y_{j'})\) are interpreted as node pairs in the dual graphs. The optimized matrix $\hat{\mathbf{P}}$  aligns edges across the graphs, typically with $p=1$.

\subsection{Graph Alignment via Distance}
To overcome the independent limitations of WD and GWD—WD aligns nodes but overlooks edge dependencies, while GWD focuses on edges without directly addressing node representations—we propose an integrated approach that optimizes both:
\begin{equation}
\begin{aligned}
    \mathop{\arg\min}\limits_{\mathbf{P}, \hat{\mathbf{P}}\in \Pi} \mathcal{D} (\mathbf{X}^i, \mathbf{P}\mathbf{X}^\Omega) + \mathcal{D} (\mathbf{A^i}, \hat{\mathbf{P}}A^\Omega\hat{\mathbf{P}}^\top) \\
    = D_{wd}(\mu, \nu) + D_{gwd}(\mu, \nu) = \mathcal{D}_{GA}(\mu, \nu)
\end{aligned}
\end{equation}
Formally, the proposed graph alignment distance is defined as:
\begin{equation}
\begin{aligned}
    \mathcal{D}_{GA}(\mu, \nu) = & \min_{\mathbf{P},\hat{\mathbf{P}} \in \Pi(u,v)} \lambda  \sum_{i,j,i',j'} \left( \mathbf{P}_{ij}  c(x_i, y_j) \right. \\
    & \left. + \quad \hat{\mathbf{P}}_{ij} \hat{\mathbf{P}}_{i'j'} L(x_i, y_j, x_{i'}, y_{j'}) \right)
\end{aligned}
\end{equation}
Here, $\lambda$ balances node and edge alignments. Given the NP-hard nature of WD and GWD, we apply entropic regularization to facilitate computation. Using the Envelope Theorem \cite{peyre2019computational}, we employ the Sinkhorn algorithm to solve for WD with regularization:
\begin{equation}
\min_{\mathbf{P} \in \Pi(u,v)} \left(\sum_{i=1}^n \sum_{j=1}^m \mathbf{P}_{ij} c(x_i, y_j) + \beta H(\mathbf{P})\right),
\end{equation}
where \( H(\mathbf{P}) = \sum_{i,j} \mathbf{P}_{ij} \log \mathbf{P}_{ij} \), and \( \beta \) is the hyper-parameter controlling the importance of the entropy term. 

By incorporating the Sinkhorn operator, this method accelerates computational efficiency, making $\mathcal{D}_{GA}$ practical for deep learning frameworks like PyTorch and TensorFlow.

\subsection{Graph Normalizing Flow}
We implement a normalizing flow $f^{\theta}(x \mid \mathcal{C})$ for improved density estimation, following GANF \cite{dai2022graph} and MTGFlow \cite{zhou2023detecting}. Here, \( x \) is the input sequence, and \( \mathcal{C} \) is the condition distribution from the GCN output. Assuming a multivariate Gaussian distribution with identity covariance, the density estimation for channel \( n \) is:
\begin{equation}
\begin{aligned}
    \mathcal{L}_{f} & = \log \left( P_{x_n}(x_n) \right) =\log \left(  P_{z_n}(f^{\theta}(x_n \mid \mathcal{C})) \left| \det \left( \frac{\partial f^{\theta}}{\partial x_n^T} \right) \right|\right) \\
    & \approx \frac{1}{BN} \sum_{b=1}^B \sum_{n=1}^N \left( -\frac{1}{2} \left\| \mathbf{x}_n - \mu_n \right\|^2 + \log \left| \det \left(\frac{\partial f^{\theta}_n}{\partial x_n^b T}\right) \right| \right)
\end{aligned}
\end{equation}

\subsection{Joint Training and Measure}
\subsubsection{Joint Training}
MADGA integrates graph structure learning with RNN-based spatio-temporal modeling. WD and GWD align dynamic graphs, differentiating normal from anomalous data. The total loss is:
\begin{equation}
    \mathcal{L} = \mathcal{D}_{GA} - \mathcal{L}_{NF}
\end{equation}
Optimization minimizes GA distance while maximizing density log-likelihood, focusing on low-density anomalies.
\begin{equation}
\begin{aligned}
    \arg\min \sum_{i,j,i',j'} \lambda &  \left( \mathbf{P}_{ij}  c(x_i, y_j) \right. + \\ \left. \quad \hat{\mathbf{P}}_{ij} \hat{\mathbf{P}}_{i'j'} L(x_i, y_j, x_{i'}, y_{j'}) \right)  & + \arg\max\log \left( P_{\mathcal{X}}(x) \right)
\end{aligned}
\end{equation}

\subsubsection{Anomaly Measure}
Anomalies are detected based on low log-likelihoods. The anomaly score \( S_b \) for sequence \( X^b \) combines graph alignment distance with average negative log-likelihood:
\begin{equation}
S_b = \mathcal{D}_{GA} - \frac{1}{N} \sum_{n=1}^N \log(P_{x_n}(x_n^b))
\end{equation}
Higher scores suggest greater abnormality. Thresholds are set by the interquartile range (IQR) of training scores, with \( Q_1 \) and \( Q_3 \) are the 25-th and 75-th percentile of \( S_b \).
\begin{equation}
\text{Threshold} = Q_3 + 1.5 \times (Q_3 - Q_1),
\end{equation}

\section{Experiment}
\label{section:Experiment}
\subsection{Experiment Setting}
\subsubsection{Dataset}
We evaluate MADGA on five real-world MTS datasets with labeled anomalies: SWaT \cite{goh2017dataset}, WADI \cite{ahmed2017wadi}, PSM \cite{abdulaal2021practical}, MSL \cite{hundman2018detecting}, and SMD \cite{su2019robust}. These datasets span industrial sensors and server metrics. Following prior works \cite{dai2022graph, zhou2023detecting}, we split each dataset into 60\% for training and 40\% for testing. Dataset details are summarized in Table \ref{tab:dataset}.

\begin{table}[t]
\centering
\caption{The static and settings of four public datasets.}
\label{tab:dataset}
\large
\resizebox{\linewidth}{!}{
\begin{tabular}{@{}ccccccc@{}}
\toprule
Dataset & \begin{tabular}[c]{@{}c@{}}Channel\\ number\end{tabular} & \begin{tabular}[c]{@{}c@{}}Training\\ set\end{tabular} & \begin{tabular}[c]{@{}c@{}}Training set\\ anomaly ratio (\%)\end{tabular} & \begin{tabular}[c]{@{}c@{}}Testing\\ set\end{tabular} & \begin{tabular}[c]{@{}c@{}}Testing set\\ anomaly ratio (\%)\end{tabular} \\ 
\midrule
SWaT    & 51                     & 269951       & 17.7                             & 89984       & 5.2                             \\
WADI    & 123                     & 103680       & 6.4                             & 69121       & 4.6                             \\
PSM     & 25                     & 52704        & 23.1                             & 35137       & 34.6                            \\
MSL     & 55                     & 44237        & 14.7                             & 29492       & 4.3                             \\
SMD     & 38                     & 425052       & 4.2                              & 283368      & 4.1                             \\ 
\bottomrule
\end{tabular}}
\vspace{-0.4cm}
\end{table}

\subsubsection{Baselines and Details}
We compare MADGA with state-of-the-art semi-supervised methods (DeepSAD \cite{ruff2019deep}, DeepSVDD \cite{ruff2018deep}, ALOCC \cite{sabokrou2020deep}, DROCC \cite{goyal2020drocc}, USAD \cite{audibert2020usad}) and unsupervised methods (DAGMM \cite{zong2018deep}, GANF \cite{dai2022graph}, MTGFlow \cite{zhou2023detecting}). Experiments are conducted with window sizes (60 for PSM and SMD; 80 and 100 for SWaT and MSL) and a stride of 10. Optimization uses Adam at a 0.002 learning rate. The architecture includes a single LSTM layer for temporal features and a self-attention layer (dropout ratio of 0.2) for dynamic graph learning. The normalizing flow model uses a MAF configuration, with batch sizes of 256 (512 for SWaT). The alignment distance weight, $\lambda$, is set at 0.1. All experiments run for 60 epochs on NVIDIA A100 and A800 GPUs. The code and data is available in \url{https://github.com/wyy-code/MADGA}

\subsection{Main Results}

\begin{table}[htbp]
\caption{Main results: AUC-ROC (\%) of anomaly detection.}
\label{tab1}
\renewcommand\arraystretch{1.3}
\setlength{\tabcolsep}{4pt}
\centering
\resizebox{!}{!}{
\begin{tabular}{c|ccccc}
\toprule
Dataset  & SWaT              & WADI              & PSM               & MSL               & SMD               \\
\midrule
DeepSVDD & 72.8±3.4          & 85.9±1.8          & 65.3±6.6          & 61.2±5.4          & 75.5±15.5         \\
ALOCC    & 58.3±2.1          & 80.6±4.7          & 63.8±2.7          & 53.1±1.1          & 80.5±11.1         \\
DROCC    & 75.2±4.6          & 75.9±2.1          & 72.3±2.2          & 53.4±1.6          & 76.7±8.7          \\
DeepSAD  & 75.4±2.4          & 79.4±5.2          & 73.2±3.3          & 61.6±0.6          & 85.9±11.1         \\
USAD     & 78.8±1.0          & 86.1±0.9          & 78.0±0.2          & 57.0±0.1          & 86.9±11.7         \\
DAGMM    & 72.8±3.0          & 77.2±0.9          & 64.6±2.6          & 56.5±2.6          & 78.0±9.2          \\
GANF     & 82.4±1.2          & 90.7±0.7          & 82.3±1.6          & 64.2±1.5          & 90.3±7.4          \\
MTGFlow  & 84.2±1.4          & 91.9±1.1          & 84.9±2.4          & 66.4±1.8          & 91.5±7.2          \\
\midrule
MADGA    & \textbf{88.9±0.7} & \textbf{92.0±1.0} & \textbf{87.2±1.4} & \textbf{68.1±1.6} & \textbf{92.1±6.7} \\
\bottomrule
\end{tabular}}
\vspace{-0.3cm}
\end{table}


MADGA was evaluated against seven baselines using the AUC-ROC metric, with results summarized in Table \ref{tab1}. For the SMD dataset, we averaged performance across its 28 subsets. MADGA consistently outperformed all baselines in unsupervised settings, yielding the following insights: (\romannumeral1) Methods like DeepSVDD and DROCC, which project samples into a hypersphere, often fail to define accurate decision boundaries for anomaly detection, while generative-based models like ALOCC and USAD do not effectively capture interdependencies in the data, resulting in weaker detection performance. (\romannumeral2) DAGMM, constrained by Gaussian Mixture Models, struggles with distribution estimation across multiple entities, and DeepSAD’s semi-supervised approach is limited by label quality, making it less effective in industrial contexts with label scarcity or noise. (\romannumeral3) Both GANF and MTGFlow, while performing better than some baselines, do not fully leverage interdependencies for anomaly detection. MADGA’s superior performance stems from its explicit use of interdependencies, allowing it to robustly detect deviations in MTS data.
\vspace{-0.2cm}


\begin{table}[t]
\caption{Ablation study for MADGA.}
\label{tab2:ablation}
\centering
\resizebox{!}{!}{
\begin{tabular}{c|ccccc}
\toprule
& PSM               & SWaT              & MSL               & SMD    &WADI           \\
\midrule
w/o GA        & 84.9±2.4          & 83.2±1.4          & 65.4±1.8          & 91.5±7.2      & 91.9±1.1    \\
w/o GWD        & 85.8±2.1          & 85.9±1.5          & 66.8±1.9          & 91.6±7.0          & 91.9±1.1  \\
w/o WD         & 86.9±1.5          & 88.3±1.1          & 67.6±1.7          & 91.8±6.7          & 92.1±1.0  \\
\textbf{MADGA} & \textbf{87.2±1.4} & \textbf{88.9±0.7} & \textbf{68.1±1.6} & \textbf{92.1±6.7} & \textbf{92.0±1.0} \\
\bottomrule
\end{tabular}}
\vspace{-0.5cm}
\end{table}

\begin{table}[t]
\caption{Hyper-parameter analysis for MADGA on SWAT.}
\label{tab2:parameter}
\centering
\resizebox{\linewidth}{!}{
\begin{tabular}{c|ccccc}
\toprule
{Batch \textbackslash Window}  & 40       & 60       & 80                & 100      & 120 \\
\midrule
64                                      & 83.5±2.0 & 83.8±2.3 & 83.4±2.3          & 83.9±2.6 & 85.7±1.8 \\
128                                     & 83.8±1.9 & 84.0±2.0 & 86.0±2.4          & 85.1±2.4 & 86.0±2.1 \\
256                                     & 84.0±1.6 & 84.4±1.8 & 86.7±1.9          & 87.0±1.7 & 86.0±1.7 \\
512                                     & 86.6±1.5 & 86.8±1.3 & \textbf{88.9±0.7} & 87.7±1.2 & 86.7±1.4 \\
1024                                    & 84.2±1.6 & 85.8±1.9 & 84.6±1.5          & 84.8±2.3 & 87.1±2.3 \\
\bottomrule
\end{tabular}}
\vspace{-0.3cm}
\end{table}

\begin{figure}[t]
\centering
\subfigure{\includegraphics[width=0.95\linewidth]{MADGA_graph.png}\label{add2:subfig1}}
\vspace{-0.2cm}
\subfigure{\includegraphics[width=0.95\linewidth]{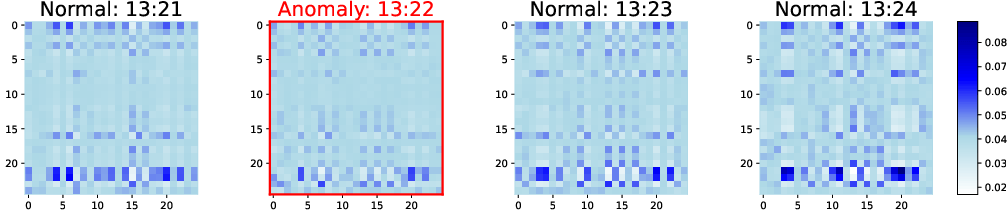}\label{add2:subfig2}}
\label{Interdependency Shift}
\caption{Interdependency Graph of MADGA (up) and w/o GA (down).}
\vspace{-0.4cm}
\end{figure}

\subsection{Architecture Analysis}

\subsubsection{Ablation Study}
We conducted an ablation study to evaluate the contributions of each component in MADGA, with results shown in Table \ref{tab2:ablation}. We tested configurations excluding node alignment (w/o WD), edge alignment (w/o GWD), and both (w/o GA). The results underscore the importance of both node and edge alignments for anomaly detection. The poorest performance was observed without both (w/o GA), highlighting the necessity of leveraging interdependencies. While MADGA without node alignment (w/o WD) performed better than without edge alignment (w/o GWD), the findings confirm that while node alignment handles distributional deviations, edge alignment is critical for capturing interdependencies. Combining both leads to optimal performance.

\subsubsection{Influence of the Batch Size and Window Length}
We explored the impact of batch size and window length, as summarized in Table \ref{tab2:parameter}. MADGA's performance was sensitive to batch size, with 512 identified as optimal for the SWaT dataset. Larger batches risked overfitting, positioning anomalies in high-density regions. Larger window sizes generally improved performance by allowing for more accurate distribution approximations, enhancing anomaly detection.
\vspace{-0.1cm}

\subsection{Distinguishing and Utilizing Interdependency}
We analyzed shifts in interdependencies from normal to anomalous states in the SWaT dataset, comparing MADGA with a variant without GA (w/o GA). Using self-attention, MADGA constructs dynamic graphs where the attention matrix serves as the adjacency matrix. Fig. \ref{Interdependency Shift} shows the learned edge weights over time, illustrating more pronounced shifts in interdependencies under MADGA than in the w/o GA variant. The distinct graph structures during normal conditions highlight GA's effectiveness in capturing and utilizing interdependencies. MADGA's edge alignment further enhances this distinction, improving anomaly detection by explicitly measuring interdependency differences. Future work will focus on refining interdependency variation measurements and edge alignment to better model complex data relationships.
\vspace{-0.2cm}

\section{Conclusion}
\label{section:Conclusion}
This work introduces MADGA, the first framework to transform MTS anomaly detection into a graph alignment problem by leveraging both distributional and interdependency aspects of channels. Using Wasserstein distance for node alignment and Gromov-Wasserstein distance for edge alignment, MADGA establishes a robust theoretical basis. Extensive experiments on real-world datasets demonstrate MADGA's superior performance in anomaly detection and interdependency analysis, highlighting the critical role of explicitly capturing interdependencies for identifying anomalies.

\section*{Acknowledgment}
\vspace{-0.2cm}
This work was supported by the National Natural Science Foundation of China under Grants (62201072, 62101064, 62171057, U23B2001, 62001054, 62071067), the Ministry of Education and China Mobile Joint Fund (MCM20200202, MCM20180101)

\bibliographystyle{IEEEtran}
\bibliography{reference.bib}

\end{document}